\newcommand*{\fullref}[1]{\hyperref[{#1}]{\autoref*{#1} \nameref*{#1}}}
\newcommand{\conref}[1]{%
  \ifthenelse{\equal{#1}{\string 1}}
    {#1$^{\text{st}}$}
    {
    \ifthenelse{\equal{#1}{\string 2}}
        {#1$^{\text{nd}}$}
        {
            \ifthenelse{\equal{#1}{\string 3}}
            {#1$^{\text{rd}}$}
            {#1$^{\text{th}}$}%
        }%
    }%
}
\begin{document}

\title{Conceptual capacity and effective complexity of neural networks}

\author{\name Lech Szymanski \email lech.szymanski@otago.ac.nz \\
        \name Brendan McCane \email brendan.mccane@otago.ac.nz \\
        \name Craig Atkinson \\
       \addr Department of Computer Science\\
       University of Otago\\
       Dunedin, 9016, New Zealand
      }

\editor{Kevin Murphy and Bernhard Sch{\"o}lkopf}

\maketitle


\begin{abstract}
We propose a complexity measure of a neural network mapping function based on the diversity of the set of tangent spaces from different inputs.  Treating each tangent space as a linear PAC concept we use  an entropy-based measure of the bundle of concepts in order to estimate the \textit{conceptual capacity} of the network.  The theoretical maximal capacity of a ReLU network is equivalent to the number of its neurons.  In practice however, due to correlations between neuron activities within the network, the actual capacity can be remarkably small, even for very big networks.  Empirical evaluations show that this new measure is correlated with the complexity of the mapping function and thus the generalisation capabilities of the corresponding network.  It captures the effective, as oppose to the theoretical, complexity of the network function.  We also showcase some uses of the proposed measure for analysis and comparison of trained neural network models.
\end{abstract}


\begin{keywords}
Deep learning, learning theory, complexity measures
\end{keywords}

\maketitle


\section{Introduction}

Statistical learning theory relates theoretical complexity of a hypothesis space to an upper bound on generalisation error of a hypothesis selected from that space.   The most recent   \textit{nearly-tight} estimate of the upper bound on VC-dimension of a piece-wise linear neural network puts its theoretical complexity at $O(WL\log(W))$, where $W$ is the total number of parameters and $L$ the number of layers in the network \citep{Bartlett.etal:2019}.  By this estimate, the bigger and deeper the network, the less likely it is to generalise well.  In practice, the exact opposite is observed.

The problem with theoretical complexity might be that taking the entire hypothesis space into account is too conservative.  \citet{Choromanska.etal:2015} have shown that as the size of a ReLU network grows, it becomes increasingly likely that a good local minima will be found in training and not a global one that is more prone to overfitting. Further, the Lottery Ticket Hypothesis \citep{Frankle.etal:2019} suggests that larger networks do not necessarily utilise all their computational resources, tending to \textit{select} and adapt one of many smaller subnetworks as the solution to the problem at hand.  Also, it has been well established that despite potential for \textit{memorisation} of arbitrary input-output mappings during training \citep{zhang.etal:2017}, when possible, overparameterised neural networks \textit{discover} common patterns that lead to good generalisations \citep{devansh.etal:2017}.  It seems evident that big deep neural networks do not explore the entirety of their hypothesis space and do not reach their theoretical complexity if there is no need for it.  

In this work we propose a measure of \textit{effective complexity}, which relates the complexity of an individual hypothesis in the context of training data, as opposed to that of its hypothesis space.  In  neural network terms, this means abstracting away the architecture and measuring the complexity of a specific input-output mapping produced by a given set of parameters in that architecture.  For instance, the effective complexity of a ReLU neural network, however large, with parameters such that for every valid input all the neurons operate in their linear region, is equivalent to effective complexity of a linear model.  The proposed method characterises a neural network through first-order approximations of its mapping function.  These approximations are treated as linear concepts, each providing an individual input-output mapping.  We measure pairwise similarity between concepts and estimate the complexity of the overall mapping through an entropy-based measure of the concept bundle.  We refer to the resulting complexity measure as \textit{conceptual capacity}.


\section{Related work}

The jump off point for our work is the work of \citet{Neyshabur.etal:2017}, which  experimentally evaluates different measures of neural network complexity\footnote{\citet{Neyshabur.etal:2017} make no distinction between the theoretical and effective complexity.} with respect to three conditions that they deem are required for explanation of the good generalisation tendencies of overparameterised neural networks.  The conditions are:
\textit{
\begin{enumerate}
\item ``Networks learned using [datasets with] real labels (and which generalize well) [must exhibit] lower complexity, under the suggested measure, than those learned using random labels (and which obviously do not generalize well)."\label{cond:random}
\item ``(...) The complexity measure [can] decrease [even] as we increase the number of [parameters in the network]."\label{cond:bignet}
\item ``We (...) expect to see a correlation between the complexity measure and generalization ability among zero-training error models."\label{cond:gen}
\end{enumerate}
}

The theoretical complexity bound of $O(WL\log(W))$ given by \citet{Bartlett.etal:2019} clearly does not satisfy the first two conditions, as it does not take the state of the network after training into account, and is proportional to network size.  Effective complexity measures based on norms of network weight matrices, proposed by \citet{Neyshabur15.etal:2015}, do not satisfy the \conref{\ref{cond:bignet}} condition as they still increase with the network size, though not as quickly as the theoretical complexity bounds.  Complexity by sharpness, introduced by \citet{Nitish.etal:2017}, frees itself from dependency on network size by equating complexity to the degree of changes in network output in relation to perturbations of its weights.   However, on its own, it does not not satisfy the \conref{\ref{cond:random}} condition.  This can be alleviated by supplementing sharpness with PAC-Bayes measures, as proposed by \citet{Neyshabur.etal:2017}, but the new complexity still fails on condition \ref{cond:gen}.  In the end \citet{Neyshabur.etal:2017} finds that neither of the existing measures satisfies all three of their proposed conditions.  The effective complexity measure we propose in this paper does.

Our work builds on the notion of switched linear projection from \citet{Szymanski.etal:2020} and that of an ``active state" of a ReLU neural network.  The principle guiding our analysis is that internally, a neural network responds to a given input by switching some of its neurons ``off".  While these states might be unique for every training input, the correlations between the output of hidden neurons, arising as a result of training, ensure that some of those states produce similar functions.  We formalise and generalise the notion of these states combining ideas from differential geometry and quantum mechanics.  From the former, we equate a differential tangent bundle of a function to a set of linear concepts.  We use the fact that the internal representation of a neural network arising from a specific configuration of its parameter values is capable of giving rise to a limited diversity of these concepts.  Hence, we turn to von Neumann entropy of the similarity matrix of a sample of differential tangent concepts to measure this diversity, which we refer to as conceptual capacity.  In plain terms, we treat the neural network function as an interpolation of a set of simple concepts, and measure the size of the concept space hidden within the internal representation of a network of a particular architecture and values of its weights.  


\section{Conceptual capacity}

\newcommand\TB{\mathcal{T} \mathcal{M}}
\newcommand\TS{\mathcal{T}_x \mathcal{M}}
\newcommand\TfS{\mathcal{T}_{f(x)} \mathcal{N}}
\newcommand\Man{\mathcal{M}}
\newcommand\Nan{\mathcal{N}}
\newcommand\G{\tau}
\newcommand\xdim{D}
\newcommand\ydim{U}
\newcommand\dec[1]{\mathcal{D}\big ( {#1} \big )}
\newcommand{\Cset}[2]{\mathcal{C}_{#1#2}}
\newcommand\simf{s}
\newcommand\Simf{S}
\newcommand\TBen{K}
\newcommand\CD{\text{CD}}
\newcommand\InDom{[-1,1]}
\newcommand\OutDom{[-1,1]}
\newcommand\Partition{\mathcal{P}}
\newcommand\Sim{\text{sim}}

\begin{figure}
    \centering
    \includegraphics[width=0.35\textwidth]{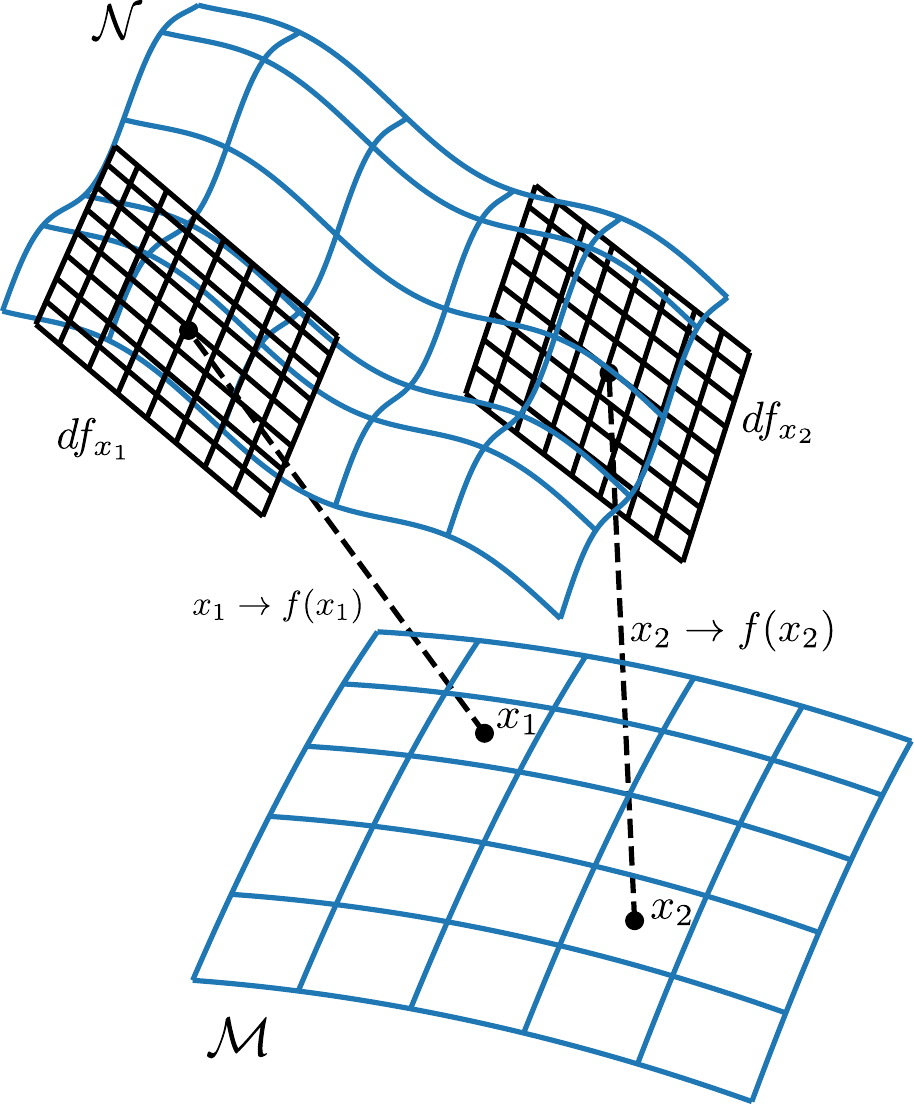}
    \caption{Illustration of $f:\mathcal{M}\to \mathcal{N}$ with tangent bundle map of two differentials at two points $x_1$ and $x_2$; $\mathcal{M}$ is the manifold of network inputs, $\mathcal{N}$ is the manifold of outputs, $df_{x_n}$ is the first order approximation of $f(x_n)$ at $x_n$ -- a concept that is part of the input-output mapping.}
    \label{fig:tangents}
\end{figure}

We associate with a neural network an almost everywhere (a.e.) differentiable function $f: \InDom^\xdim \rightarrow {\OutDom}^\ydim$\footnote{We assume without loss of generality, that the network maps to $\OutDom^\ydim$.}. We assume that the data of interest lie on a $k$-dimensional Riemannian manifold, $\Man$ where $k \le \xdim$. We set the Riemannian metric, $g$, to be the metric induced by the Euclidean metric in $\mathcal{R}^\xdim$.
We are interested in the behaviour of the network (or $f$) on $\Man$ only, so that we are interested in the function $f: \Man \rightarrow \Nan$, where $\Nan$ is the output manifold immediately prior to the decision function. For example, in the case of binary classification, $\Nan \subseteq \OutDom$, and the decision function would be a threshold on the output value. 

\subsection{Tangent bundle of a function}

We begin with a discretisation of $f$'s bundle map at training data points. For this we need some notation from differential geometry \citep{Lee:2012}.

\begin{definition}[Tangent Space $\TS$]\label{def:tspace}
Given a smooth $k$-dimensional manifold, $\mathcal{M}$, $\TS$ is the $k$-dimensional real vector space tangent to $\mathcal{M}$ at point $x$.
\end{definition}

\begin{definition}[Tangent Bundle $\TB$]\label{def:tbundle}
The tangent bundle, $\TB$, of a manifold $\mathcal{M}$ is given by the disjoint union of the tangent spaces of $\mathcal{M}$:
\begin{equation*}
    \TB = \{(x,y) | x \in \mathcal{M}, y \in \TS \}
\end{equation*}
\end{definition}

\begin{definition}[Differential $df_a$]\label{def:diff}
The differential of a smooth map, $f: \mathcal{M} \rightarrow \mathcal{N}$ is a linear map $df_x: \TS \rightarrow \TfS$.
\end{definition}

\noindent A differential $df_x$ is essentially a linear model of $f$ at point $x$.  For our purposes, $df_x$ is the hyperplane arising from the first order Taylor series approximation of $f$ at $x$ (computed as shown in Section \ref{sec:implement}).  


\begin{definition}[Tangent Bundle Map $df$]\label{def:bmap}
The differential of $f$ induces a tangent bundle map in the obvious way, $df: \TB \rightarrow \mathcal{TN}$.
\end{definition}

\noindent Figure \ref{fig:tangents} shows an illustration of a mapping from 2D manifold to a 2D manifold with differentials evaluated at two points.  It is worth nothing at this point the cardinally of $df$ with respect to some functions of interest.

\begin{lemma}\label{prop:tspacelin}
The cardinality of the tangent bundle map $df$ of a linear $f$ is
\begin{equation*}
|df|=1.
\end{equation*}
\end{lemma}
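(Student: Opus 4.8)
The plan is to unwind the definitions and invoke the elementary fact that an affine map is its own first-order Taylor approximation at every point. Write $f(x) = Ax + b$ for a fixed matrix $A \in \mathbb{R}^{\ydim \times \xdim}$ and a fixed vector $b \in \mathbb{R}^{\ydim}$ (this affine form is what "linear $f$" means in the network setting, where a layer carries a bias; if one insists on strict linearity, take $b = 0$ and the argument is unchanged). The first step is to compute $df_x$ explicitly: by Definition~\ref{def:diff}, $df_x : \TS \to \TfS$ is the linear map induced on tangent vectors, and since $f$ is affine its Jacobian equals $A$ at every point, so $df_x(v) = Av$ with no dependence on the base point $x$.

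The second step is to identify the concept attached to $x$. Following the remark after Definition~\ref{def:diff}, this concept is the hyperplane given by the first-order Taylor expansion $y \mapsto f(x) + df_x(y-x)$; substituting $f(x) = Ax + b$ and $df_x = A$ yields $y \mapsto Ax + b + A(y-x) = Ay + b$, which is just $f$ itself and again carries no dependence on $x$. Hence the set of differentials $\{\, df_x : x \in \Man \,\}$ — equivalently, the set of linear concepts induced by the bundle map $df$ — is the singleton $\{A\}$ (or $\{\, y \mapsto Ay + b \,\}$ if one tracks the offset). Finally, since $\Man$ is nonempty (it contains the training data), this set is nonempty, so its cardinality is exactly $1$ rather than $0$, giving $|df| = 1$.

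The only point that needs care — and is worth stating rather than proving — is the intended meaning of $|df|$. The bundle map $df$ viewed as a map on all of $\TB$ is of course not of cardinality one in a naive set-theoretic sense, since its graph is as large as $\TB$; what is meant here, and what the subsequent capacity measure uses, is the number of \emph{distinct} differentials (concepts) arising across points of $\Man$. Linearity of $f$ does all the work: constancy of the Jacobian collapses the entire bundle map to a single concept. I do not expect any genuine technical obstacle beyond making this identification explicit — the lemma is essentially a sanity check confirming that the effective-complexity measure to be defined assigns the minimal possible value to the simplest possible mapping.
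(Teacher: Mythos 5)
Your proposal is correct and follows essentially the same route as the paper's own proof: the first-order Taylor approximation of a linear (affine) map is the map itself at every base point, so the bundle collapses to a single concept and $|df|=1$. Your version is merely more explicit about the affine form and about interpreting $|df|$ as the number of distinct differentials, which matches the paper's intended reading.
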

\begin{proof}
The first order Taylor polynomial of a linear function is the function itself, so that $df_x=f$ for all $x\in \mathcal{M}$ inducing a bundle consisting of single tangent space.  As a result $df=\{ f \}$ and therefore $|df|=1$.
\end{proof}

\begin{lemma}\label{prop:tspacerelu}
The cardinality of the tangent map $df$ of $f$ that is a ReLU network (without max pooling) with the total number of $K$ neurons\footnote{The number of neurons in a convolutional layer is equivalent to the size of its output map times the number of filters.} is:
\begin{equation*}
|df|\le 2^K.
\end{equation*}
\end{lemma}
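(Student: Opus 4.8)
The plan is to exploit the fact that a ReLU network without max pooling is a continuous piecewise-affine map whose ``pieces'' are indexed by the on/off states of its neurons. First I would introduce the \emph{activation pattern}: for a fixed input $x\in\Man$, each of the $K$ neurons has a pre-activation value, and the ReLU either passes it through (when that value is positive) or zeroes it (when it is negative). Recording, for each neuron, which of these two branches is taken yields a binary vector $a(x)\in\{0,1\}^K$. The set of inputs sharing a given pattern $a$ forms a region (an intersection of open half-spaces determined by the earlier layers), and these regions, together with their boundaries where some pre-activation vanishes, cover $\Man$ up to a measure-zero set.

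Next I would show that on the interior of each such region $f$ coincides with a single affine map. This is an induction over layers: replacing every ReLU by either the identity or the zero map according to $a$ turns each layer into an affine map, and a composition of affine maps is affine. Consequently the differential $df_x$ --- the first order Taylor approximation of $f$ at $x$ --- is constant on the region and depends on $x$ only through $a(x)$. Hence the assignment $x\mapsto df_x$ factors through $x\mapsto a(x)$, so the number of distinct differentials is at most the number of realised activation patterns, which is bounded by $|\{0,1\}^K|=2^K$. On the measure-zero boundary set $f$ is either non-differentiable or its differential agrees with that of an adjacent region, so no new elements are added to the bundle map; this yields $|df|\le 2^K$.

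The main obstacle --- and the reason the hypothesis excludes max pooling --- is making the ``fixed pattern $\Rightarrow$ affine'' step airtight: one must verify that, given the branches chosen by all earlier neurons, the branch taken by each subsequent neuron is governed by the sign of a \emph{single} affine functional of the input, which fails for a max-pooling unit (whose output is a maximum of several affine functionals and hence is piecewise affine with more than two pieces per unit). A secondary but routine point is the bookkeeping for convolutional layers, where, as noted in the footnote, each spatial location of each filter counts as a separate neuron, so the count $K$ and the bound $2^K$ are unaffected by weight sharing. I do not expect the boundary/non-differentiability technicality to cause real difficulty, since we are only proving an upper bound and may restrict attention to the full-measure set of points where $f$ is differentiable.
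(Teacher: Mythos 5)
Your proposal is correct and follows essentially the same route as the paper's proof: both arguments index the distinct differentials by the binary on/off activation pattern of the $K$ ReLU neurons, observe that a fixed pattern reduces the network to a single linear (affine) map with a single differential, and conclude $|df|\le 2^K$ by counting patterns. Your write-up is somewhat more careful than the paper's (the explicit layer-by-layer induction, the treatment of the measure-zero boundary set, and the explanation of why max pooling breaks the two-branches-per-unit accounting), but these are refinements of the same argument rather than a different one.
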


\begin{proof}
This property follows from the switching projection argument from \citet{Szymanski.etal:2020}.  For given input $x$ some ReLU neurons will be in their ``off" state, producing output of zero, and others in their  ``on" state passing the activity of the neuron as a linear function.  Since the ``off" neurons do not contribute to the computation of the output, the subnetwork that participates in the computation of $f(x)$ consists solely of the ``on" neurons that are in their linear state.  This subnetwork is linear and by Lemma \ref{prop:tspacelin} gives rise to the same differential for any $x$ that activates that subnetwork.  Given $K$ neurons, there are $2^K$ possible linear subnetworks, and so there are at most $2^K$ different $df_x$'s.
\end{proof}

The $|df|$ of a ReLU network with max pooling is finite as well, but the computation of its upper bound is a bit more complicated, as a max pooling unit can switch between $1+hw$ states, where $h\times w$ is the height and width of its input window.  Average pooling does not affect $|df|$, since it provides a linear transformation.

\subsection{Concept similarity}

Next, we introduce a similarity score or kernel function on $df$, which we will use to evaluate $f$'s complexity.  We start in the continuous domain and then proceed to discrete approximations.

\begin{definition}[Decision function]
A decision function, $\dec{f(x)}$, over manifold $\Nan$, partitions $\Nan$ into $c$ measurable partitions, $\Partition_{f,i}$, where $\cup_{1\le i \le c} \Partition_{f,i} = \Nan$.
\end{definition}

\noindent Example decision functions include the hardlimit function for binary classification, or softmax for multi-class classification, though for softmax we would have a decision made from multiple outputs $\dec{f(x)}=\dec{\begin{bmatrix}f^{[1]}(x) & \hdots & f^{[U]}(x)\end{bmatrix}}$ where $f^{[u]}$ is the output function of neuron $u$.   Note that a single partition might consist of multiple disconnected sets, we only require that a partition is measurable.

\begin{definition}[Differential decision function]
A differential, $df_{a}$, creates a linear decision function, $\dec{df_{a}(x)}$, where $a,x \in M$.
\end{definition}

\noindent We refer to the decision function $\dec{df_{a}(x)}$ as a \textit{concept} induced by input $a$.  

\begin{definition}[Volume of a subset]
The volume, $V(S)$, of a subset of a Riemannian manifold, $S \subseteq \Nan$, is defined as $V(S) = \int_{S} dV_g$, where $dV_g$ is the usual Riemannian volume form.
\end{definition}

\noindent The similarity between two concepts is defined simply as the relative volume in which those decision functions agree.

\begin{definition}[Similarity measure]
\label{def-sim}
The similarity, {\normalfont $\Sim$}, between two decision functions $\dec{f(x)}$ and $\dec{g(x)}$, is:
{\normalfont 
\begin{equation*}
    \Sim\Big [\dec{f(x)}, \dec{g(x)}\Big ] = \frac{1}{V(\Nan)}\sum_i V(P_{f,i} \cap P_{g,i})
\end{equation*}
}
\end{definition}

\noindent Definition \ref{def-sim} has several nice properties:
    if the decision functions agree everywhere, then $\Sim=1$ and if they agree nowhere then $\Sim=0$;
    $\Sim$ is symmetric;
    $\Sim$ is a kernel over decision functions;
    one can define a distance by setting $d(x_1,x_2)=1-\Sim(x_1,x_2)$ or $d(x_1,x_2)=\frac{1}{\Sim(x_1,x_2)}$.
In the probabilistic framework we can think of the above similarity as the probability measure over the agreement between two concepts as defined in \citet{Kearns.etal:1994}:

\begin{equation}
    \Sim\Big [\dec{f(x)}, \dec{g(x)}\Big ]=P\Big (|\dec{f(x)}-\dec{g(x)}|<\epsilon \Big ) 
\end{equation}
for arbitrary small $\epsilon$.

\subsection{Entropy of the concept space}

We propose to measure the capacity of the concept space by summarising the similarities between all pairs of concepts in the space. From Lemma \ref{prop:tspacerelu}, for ReLU networks, we know that the number of concepts is bounded above and therefore we could, in principle, compute an exact quantity based on the network parameters only. However, we are only concerned in the complexity within the manifold of interest, and since we assume the training data are a representative sample, we approximate the complexity by measuring the pairwise similarity of concepts at training data points. For $N$ training points, this involves computing an $N\times N$ similarity matrix.

\begin{definition}
The similarity matrix of a sample of $N$ differential decision functions from $df$ is

{\normalfont 
{\scriptsize
\begin{equation*}
    \Simf_{df_N}=\begin{bmatrix}\Sim\Big [\dec{df_{a_1}(x)}, \dec{df_{a_1}(x)}\Big ] & \hdots & \Sim\Big [\dec{df_{a_1}(x)}, \dec{df_{a_N}(x)}\Big ] \\
\vdots & \ddots & \vdots \\
\Sim\Big [\dec{df_{a_N}(x)}, \dec{df_{a_1}(x)}\Big ]  & \hdots & \Sim\Big [\dec{df_{a_N}(x)}, \dec{df_{a_N}(x)}\Big ]
\end{bmatrix},
\end{equation*}
}
}
where $df_{a}\in df$.  
\end{definition}

\noindent In a loose analogy with quantum physics, we think of each concept as a pure quantum state, with the network being composed of a mixture of such states. The similarity matrix, suitably normalised, is equivalent to a quantum density matrix of mixed states \citep{wilde2017quantum}, for which the natural measure of complexity is the von Neumann entropy. We also note that von Neumann entropy has a recent history in measuring the complexity of graphs (for example, \cite{han2012graph}). This leads to the following definition of conceptual capacity.

\begin{definition}[Conceptual capacity]\label{def:conentro}
Given eigenvalues $\{\lambda_1,\hdots,\lambda_N\}$ of similarity matrix $S_{df_N}$, conceptual capacity is defined as the von Neumann entropy of $S_{df_N}$
\begin{equation}\label{eqn:conentro}
\TBen_{df_N}=-\sum_i^N \widehat{\lambda}_i\log_2(\widehat{\lambda}_i)
\end{equation}
where $\widehat{\lambda}_i=\dfrac{\lambda_i}{\sum_{j=1}^N \lambda_j}$ is the normalised eigenvalue, and $0\log(0)=0$.  
\end{definition}

\noindent $\TBen_{df_N}$ is essentially an indicator of compression ratio of the differentials in the bundle $df_N$.  The rank of $\Simf_{df_N}$, and thus the number of non-zero $\lambda_i$'s, correspond to the number of independent concepts involved in mapping of the $N$ input points to the network's output.  The value of $\lambda_i$ gives the relative prominence of a concept within the mapping.  Thus, we expect a network that does more memorisation to use more concepts with more uniform distribution of the eigenvalues, which should result in a higher conceptual capacity.  

Lemmas \ref{prop:tspacelin} and \ref{prop:tspacerelu} established that cardinality of $|df|$ is finite for some (most notably ReLU) networks.  Since $|df|$ bounds the rank of $S_{df_N}$ for arbitrary $N$, we can derive the upper bound on $\TBen_{df}$.

\begin{lemma}\label{prop:hupper}
The upper bound on conceptual capacity is
\begin{equation*}
\TBen_{df}\le \log_2|df|
\end{equation*}
\end{lemma}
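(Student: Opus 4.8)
The plan is to bound the von Neumann entropy of $\Simf_{df_N}$ by the logarithm of its rank, and then bound that rank by $|df|$, uniformly in $N$. First I would establish that $\Simf_{df_N}$ is positive semidefinite: by the remark following Definition~\ref{def-sim}, $\Sim$ is a kernel on decision functions, so the Gram matrix $\Simf_{df_N}$ has only non-negative eigenvalues $\lambda_i\ge 0$. Its trace equals $\sum_i \Sim\big[\dec{df_{a_i}(x)},\dec{df_{a_i}(x)}\big]=N$, since $V(\Partition_{f,i}\cap\Partition_{f,i})=V(\Partition_{f,i})$ makes every concept agree with itself everywhere. Hence the normalised eigenvalues $\widehat\lambda_i=\lambda_i/\sum_j\lambda_j$ form a probability vector, and $\TBen_{df_N}$ in \eqref{eqn:conentro} is a genuine Shannon entropy of a distribution supported on at most $r:=\mathrm{rank}(\Simf_{df_N})$ atoms.

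Second, I would invoke the elementary fact that the entropy of any probability distribution supported on at most $r$ atoms is at most $\log_2 r$, by concavity of $t\mapsto -t\log_2 t$ and Jensen's inequality (equality at the uniform distribution). This yields $\TBen_{df_N}\le \log_2 r$. The key third step is to bound $r$ by $|df|$ independently of $N$: the $i$-th row of $\Simf_{df_N}$ depends on $a_i$ only through the induced partitions $\{\Partition_{df_{a_i},\cdot}\}$, so if $df_{a_i}$ and $df_{a_j}$ induce the same differential decision function then rows $i$ and $j$ of $\Simf_{df_N}$ coincide. Since every $df_{a}$ lies in $df$, which is finite by Lemma~\ref{prop:tspacerelu} (and a singleton for linear $f$ by Lemma~\ref{prop:tspacelin}), the rows of $\Simf_{df_N}$ take at most $|df|$ distinct values, so $r\le\min(N,|df|)\le|df|$. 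Combining the three steps gives $\TBen_{df_N}\le\log_2|df|$ for every $N$, and since $\TBen_{df}$ is the conceptual capacity of the full bundle (i.e. $\sup_N \TBen_{df_N}$) and the bound is uniform in $N$, the claim follows.

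The main obstacle is the rank argument, and specifically making precise that coincidence of induced concepts forces equality of the corresponding rows of $\Simf_{df_N}$ — this is immediate from Definition~\ref{def-sim}, since $\Sim$ is a function of the partitions alone — together with the trivial observation that $\mathrm{rank}$ never exceeds the number of distinct rows, so cancellation cannot push it above the count of distinct concepts. A minor secondary point is the interpretation of $\TBen_{df}$ for the full (possibly non-sampled) bundle; because $\log_2|df|$ does not depend on $N$, passing from finite samples to the supremum causes no difficulty.
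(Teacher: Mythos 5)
Your proposal is correct and follows essentially the same route as the paper: bound the rank of $\Simf_{df_N}$ by $|df|$ and then use the fact that the uniform distribution maximises entropy, giving $\log_2|df|$. The only difference is that you spell out the supporting details the paper leaves implicit (positive semidefiniteness of the kernel matrix, and the argument that coinciding concepts force duplicate rows, so the rank bound holds uniformly in $N$), which strengthens rather than changes the argument.
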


\begin{proof}
Since the rank of $S_{df_N}$ can be at most $|df|$, and uniform distribution has the largest entropy, then the largest possible $\TBen_{df}$, regardless of $N$, is based on the eigen decomposition of $S_{df_N}$ with $|df|$ non-zero eigenvalues where $\widehat{\lambda}_i=1/|df|$.  Conceptual capacity is therefore bounded by
\begin{align*}
\TBen_{df} & \le -\sum_{i}^{|df|}\frac{1}{|df|}\log_2\frac{1}{|df|}=\log_2|df|.
\end{align*}
\end{proof}

By Lemmas \ref{prop:tspacerelu} and \ref{prop:hupper} for a ReLU neural network $\TBen_{df}\le K$, where $K$ is the number of neurons.  Thus, another way of interpreting conceptual capacity is the minimum number of independent ReLU neurons (in a hypothetical abstract network) capable of producing an equivalent function mapping.  Each neuron in a ReLU network has the potential to double the conceptual capacity of a network, just like an additional bit doubles the representational capacity of a binary number.  The key departure from the binary representation analogy is that neurons in a neural network may not be independent.  In a given architecture, with the output of neurons depending on other neurons and the values of the weights on their connections, there is a degree of correlation between neural activity.  Hence, the need to measure the actual conceptual capacity of a neural network.  


\section{Implementation}\label{sec:implement}


\begin{algorithm}[t]
\SetAlgoLined
 Given $X=\{x_1,\hdots,x_N\}$ and $U$ output neurons\;
 \For{$i\mapsto 1,\hdots,N$}{
     \For{$u\mapsto 1,\hdots,U$}{
        Compute $df^{[u]}_{x_i}$\;
    }
  \For{$n\mapsto 1,\hdots,N$}{
    Compute label $h_{i,n}=\dec{\begin{bmatrix}df^{[1]}_{x_i}(x_n) & \hdots & df^{[U]}_{x_i}(x_n)\end{bmatrix}}$\;
  }
 }
 \For{$i\mapsto 1,\hdots,N$}{
    \For{$j\mapsto i,\hdots,N$}{
        $ \Simf_{df_N[i,j]} := 0$\;
        \For{$n\mapsto 1,\hdots,N$}{
            \If{$h_{i,n}=h_{j,n}$}{
                $\Simf_{df_N[i,j]} := \Simf_{df_N[i,j]}+1$\;
            }
        }
        $\Simf_{df_N[i,j]} := \Simf_{df_N[i,j]}/N$\;
        $\Simf_{df_N[j,i]} := \Simf_{df_N[i,j]}$\;
  }
 }
 Compute eigenvalues $\lambda_1,\hdots\lambda_N$ of $\Simf_{df_N}$\;
 Normalise $\hat{\lambda}_i := \lambda_i/\sum_{j=1}^N\lambda_j$\;
 $\TBen_{df_N} := -\sum_{i=1}^N\hat{\lambda}_i\log{\hat{\lambda}_i}$
 \caption{Computation of conceptual capacity}\label{alg:cc}
\end{algorithm}

The computation of $\TBen_{df_N}$ is described in Algorithm \ref{alg:cc} where the differential concept $df_{a}(x)$ is given by first order Taylor series approximation of given neurons output $f(x)$ at $a$,

\begin{equation}\label{eqn:dfx}
df_a(x)=f(a)+{\frac{\partial f(a)}{\partial x}}\cdot (x-a),
\end{equation}
where
\begin{equation*}
\frac{\partial f(a)}{\partial x}=\begin{bmatrix}\frac{\partial f(a)}{\partial x^{[1]}} & \hdots & \frac{\partial f(a)}{\partial x^{[D]}}\end{bmatrix}
\end{equation*}
is a vector, $x^{[i]}$ is the $i^{\text{th}}$ component of $x$, and ($\cdot$) is the dot product operator.
The function $f(x)$ is taken to be the output of a network's neuron prior to applying the activation function (for instance, before softmax).


\section{Evaluation}\label{sec:eval}

To evaluate the conceptual capacity as a measure of effective complexity, we trained various networks on benchmark datasets and analysed them based on the resulting $\TBen_{df_N}$ coupled with the performance on a test set.  Eigenvalue decomposition on very large similarity matrices proved too time consuming and memory demanding, and so for most evaluations we computed the mean $\TBen_{df_N}$ over several choices of different $N$ samples from the training set instead of one computation over the entire training set.  

\subsection{Convergence of $\TBen_{df_N}$}\label{sec:toy}

\begin{figure*}
    \centering
    \includegraphics[width=0.8\textwidth]{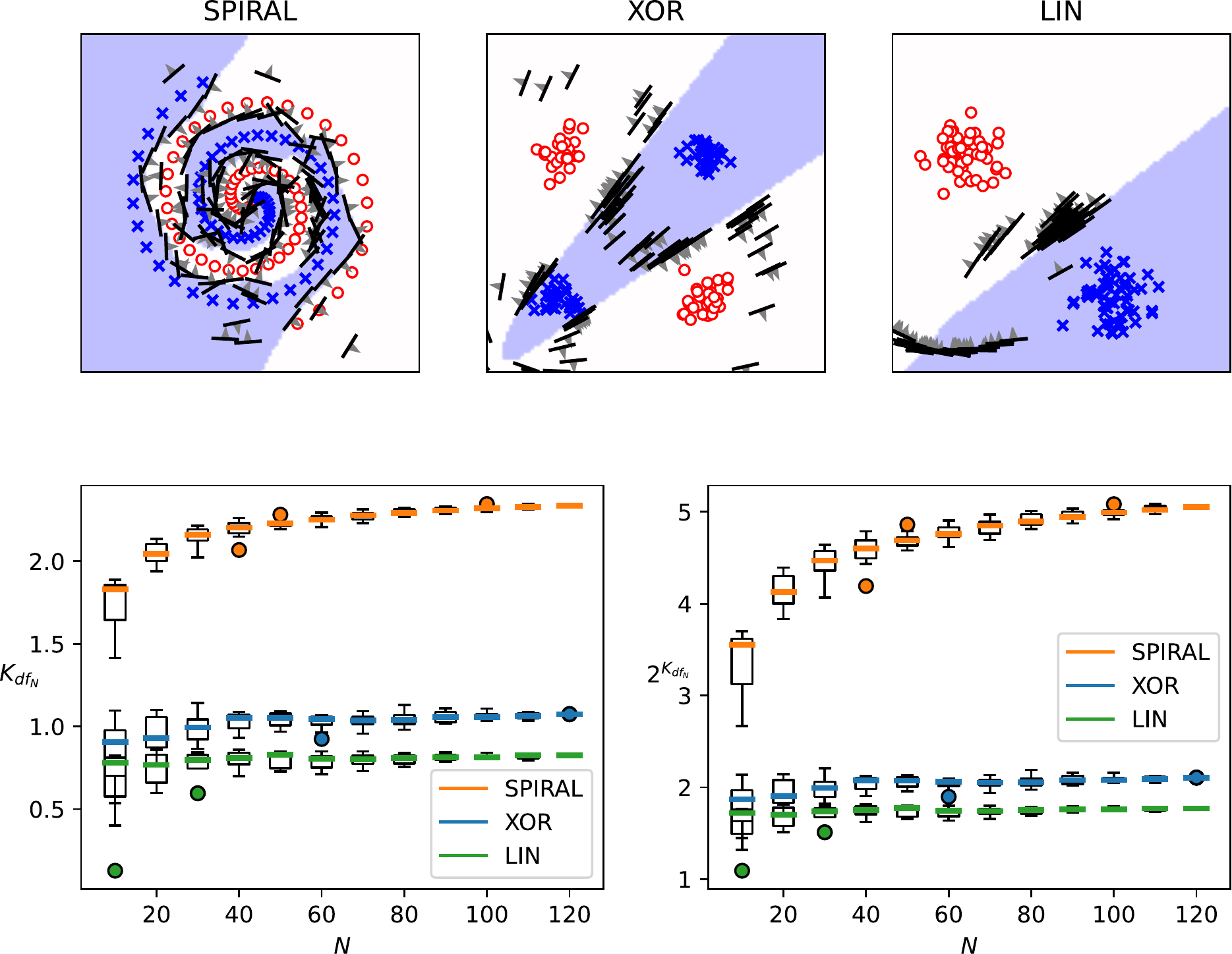}
    \caption{Visualisation of the function mapping (top row) provided by a single output fully connected 3 hidden layer network of 128 ReLU neurons each trained on 2D-input SPIRAL, XOR and LIN;  training points are shown as red o's and blue x's; the mapping function is depicted as white/blue shaded classification regions; the differentials $df_{x_i}$ from the bundle generated by all training points are shown as dashed line segments (indicating the boundary between half-spaces given by the hyperplane equation of the corresponding differential) with gray arrows (pointing toward the positive half-space); the conceptual capacity $\TBen_{df_N}$ and the corresponding number of independent concepts $2^{\TBen_{df_N}}$ are shown in the bottom row as box plots generated from 10 evaluations of different choices of random points from the dataset for each $N$.}\label{fig:toy}
\end{figure*}

For this first test we created three toy datasets for binary classification of different complexity: the SPIRAL, XOR and LIN (see top row of Figure \ref{fig:toy}).  Each dataset consisted of 120 points in 2D.  On each dataset we trained a single output fully connected neural network of 3 hidden layers and 128 ReLU neurons per layer (for details of the training see Appendix \ref{app:toy}).  The top row of Figure \ref{fig:toy} presents visualisations of the classification regions given by each trained network along with the differential bundle derived from all the 120 training points.  Each concept is depicted as a separating line. It is clear that even for these simple datasets we get distinct $df_{x_i}$ for each $x_i$, meaning $|df_N|\approx N$ for overparameterised networks.  However, the SPIRAL-trained network's bundle looks more diverse than that of the XOR-trained network, which in turns looks more diverse than that of the LIN-trained network.  

The bottom row of Figure \ref{fig:toy} shows a box plot of conceptual capacity $\TBen_{df_N}$ over 10 different random choices of $N$ training points (for cases where $N<120$) and the corresponding box plot of the number of corresponding concepts $2^{K_{df_N}}$.  Conceptual capacity increases with $N$ approaching a constant value, as most evident on the LIN example.  It might seem that $\TBen_{df_N}$ should monotonically increase with $N$.  However, as evident by $\TBen_{df_N}$ near $N=60$ on XOR, a drop in conceptual capacity can happen.  This occurs when the similarity matrix $\Simf_{df_N}$ over an expanded number of samples forming the input manifold maintains the same rank, but yields a less uniform distribution of $\lambda_i$'s in Equation \ref{eqn:conentro}, which could happen if additional points coincide with a previously considered input sample.  However, under i.i.d. assumption of the distribution of the $N$-point sample, we don't expect these drops to be very substantial.  

It is quite evident that conceptual capacity rates the SPIRAL-trained neural network as the most complex, then XOR, and then LIN, inline with the expectations based on the visualisation of their respective function maps.

\subsection{True vs. random labels}\label{sec:random}

\begin{figure*}
    \centering
    \includegraphics[width=0.8\textwidth]{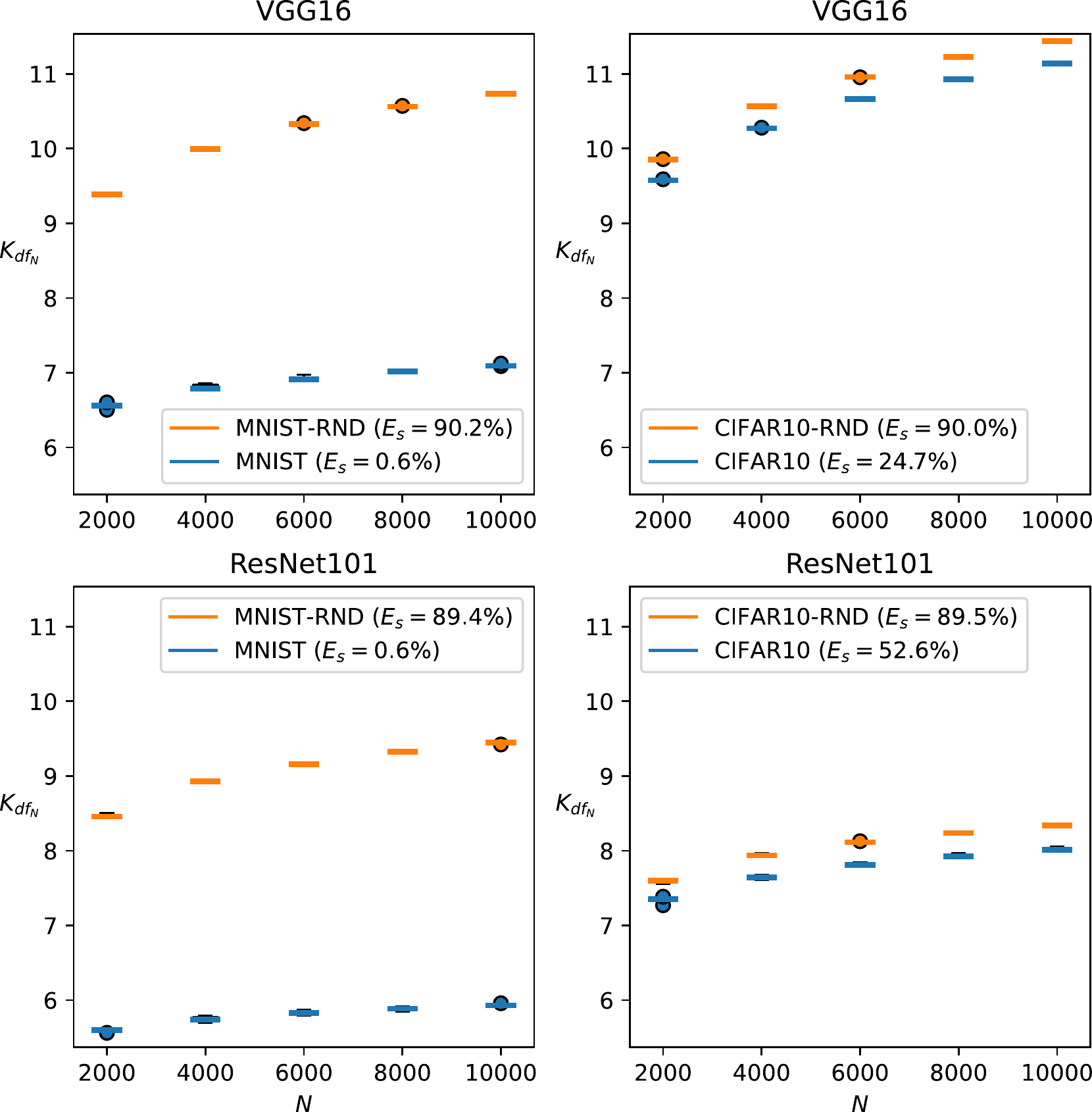}
    \caption{Conceptual capacity $\TBen_{df_N}$ over $N$ for VGG16 (top row) and ResNet101 (bottom row) trained on MNIST (left column) and CIFAR10 (right column) dataset vs. their random label variants, MNIST-RND and CIFAR10-RND; the box plots show the capacity from 5 evaluations over different choices of $N$ training samples; the training error for each network was 0, the percentage test error is shown in the legend as $E_s$.}\label{fig:random}
\end{figure*}

In this experiment, inspired by \cite{zhang.etal:2017} and \cite{devansh.etal:2017}, we trained VGG16 \citep{vgg16} and ResNet101 \citep{resnet101} networks to zero training error on the MNIST \citep{mnist} and CIFAR10 \citep{cifar10} datasets as well as on their randomised label variants, MNIST-RND and CIFAR10-RND (for details of the training see Appendix \ref{app:random}).  Figure \ref{fig:random} shows the results of the conceptual capacity analysis on the trained networks.  Since we did not calculate $\TBen_{df_N}$ based on the entire set of training points, we did 5 computations for given $N$, each time selecting different random sample of $N$ inputs from the training set.  The same sets of $N$ inputs were used for evaluations of $\TBen_{df_N}$ between the true and random label sets.  

In every case, the conceptual capacity of the true label--trained network is smaller than its random label--trained counterpart, thus satisfying \citet{Neyshabur.etal:2017}'s \conref{\ref{cond:random}} condition.  Granted that comparing conceptual capacity derived from different manifolds may not be entirely sensible (even after rescaling to the same image size CIFAR10 has three colour channels to MNIST's one), we nevertheless note an interesting difference between CIFAR10-trained and MNIST-trained networks.  For a given architecture, the conceptual capacity of the CIFAR10 network is higher than that of the MNIST one.  This is expected, since CIFAR10 is a \textit{tougher} to learn dataset.  What might come as a surprise, is that the conceptual capacity of CIFAR10-RND-trained ResNet101 is lower than that of MNIST-RND-trained ResNet101.  Based on our experience of ResNet101 being \textit{more resistant} to overtraining than VGG16, we concluded that ResNet101 finds it \textit{easier} to fit random labels to spurious patterns in the noisy background of CIFAR10 than in the clean MNIST.

\subsection{Different architectures}\label{sec:bignet}

\begin{figure*}
    \centering
    \includegraphics[width=0.8\textwidth]{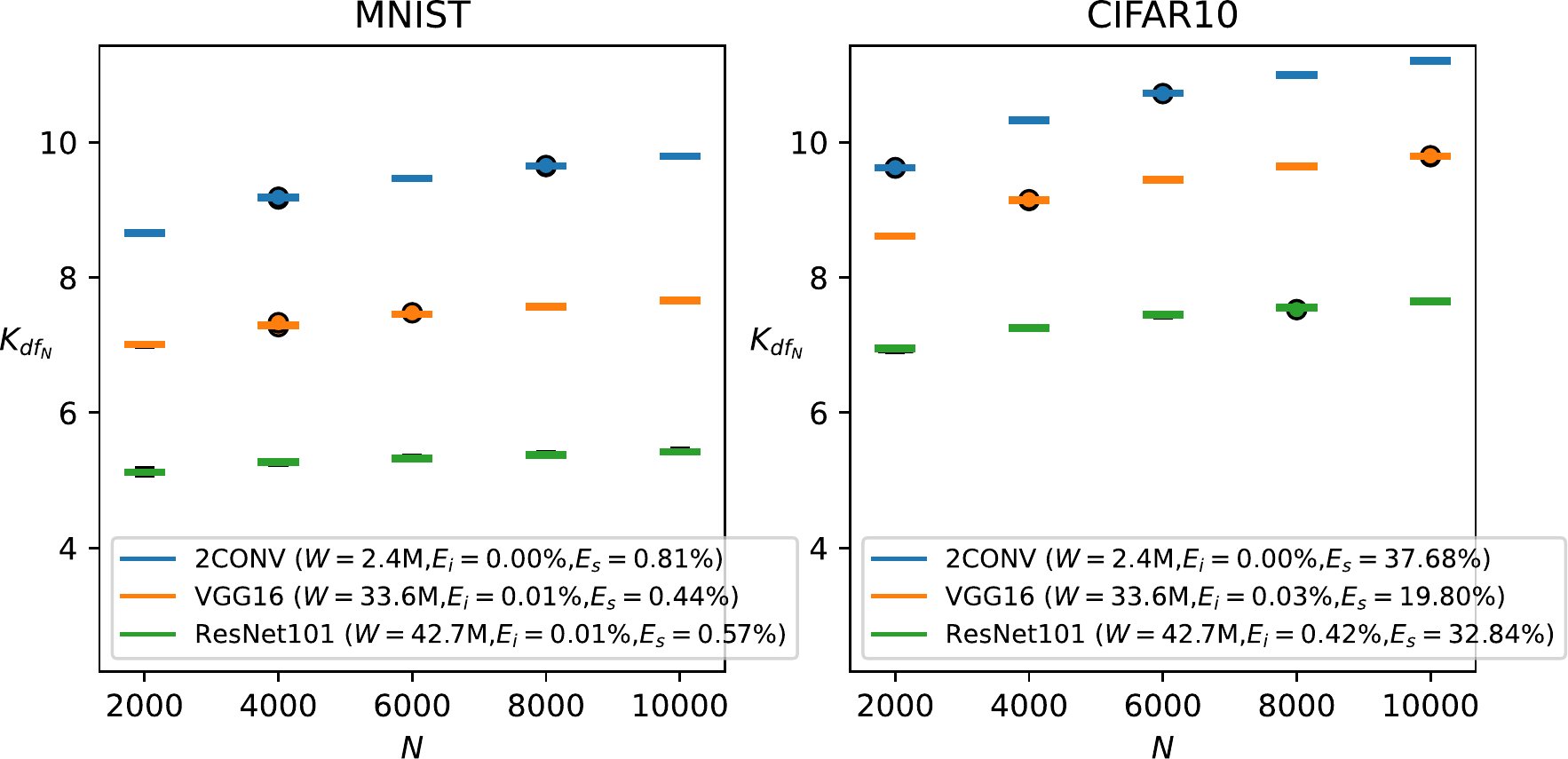}
    \caption{Conceptual capacity $\TBen_{df_N}$ over $N$ for 2CONV, VGG16 and ResNet101 architectures trained on MNIST (left plot) and CIFAR10 (right plot) datasets;  the box plots show the capacity over 5 evaluations of different choices of $N$ training samples from the corresponding dataset; the size $W$ (in millions of parameters), the percentage training error $E_i$ and test error $E_s$ of each network are shown in the legend.}\label{fig:bignet}
\end{figure*}

In this experiment we compared the conceptual capacity of different size networks trained on MNIST and CIFAR10 datasets using parameter values from the best performing epoch on the corresponding validation set.  In addition to VGG16 (33.6 million parameters) and ResNet101 (42.7 million parameters), we included a smaller two convolutional layer neural network we refer to as 2CONV (2.4 million parameters) (for description of the 2CONV architecture as well as the details of the training see Appendix \ref{app:bignet}).     
Figure \ref{fig:bignet} shows the results of the evaluation of $\TBen_{df_N}$ from MNIST and CIFAR10 datasets across all three architectures.   Just like in the previous experiment, we compute $\TBen_{df_N}$ 5 times for a given $N$, each time selecting a different random sample of inputs from the training set, but the same points between calculations of $\TBen_{df_N}$ for different architectures.  The shown $\TBen_{df_N}$ was averaged from all the evaluations over given $N$.  

This experiment shows that conceptual capacity does not necessarily grow with the network size, thus satisfying \citet{Neyshabur.etal:2017}'s \conref{\ref{cond:bignet}} condition.  We make a note of the fact that in this case the lowest conceptual capacity does not entail the best generalisation -- ResNet101 does worse than VGG16 on the test set for both MNIST and CIFAR10 experiments.  To explain this, we make an analogy to Support Vector Machines (SVMs), where maximising the margin reduces the complexity of the learner for a given choice of a kernel function.  Comparing the margin values from SVMs with different kernel functions is not sensible, as they operate in different kernel spaces.  Similarly, different networks might produce different concept spaces and their relative capacity might not provide any information about relative generalisation capabilities.  However, as we demonstrate in the next experiment, conceptual capacity within a given abstract space/network architecture does correlate with generalisation capabilities.  

\subsection{Correlation with generalisation}\label{sec:gen}

\begin{figure*}
    \centering
    \includegraphics[width=0.8\textwidth]{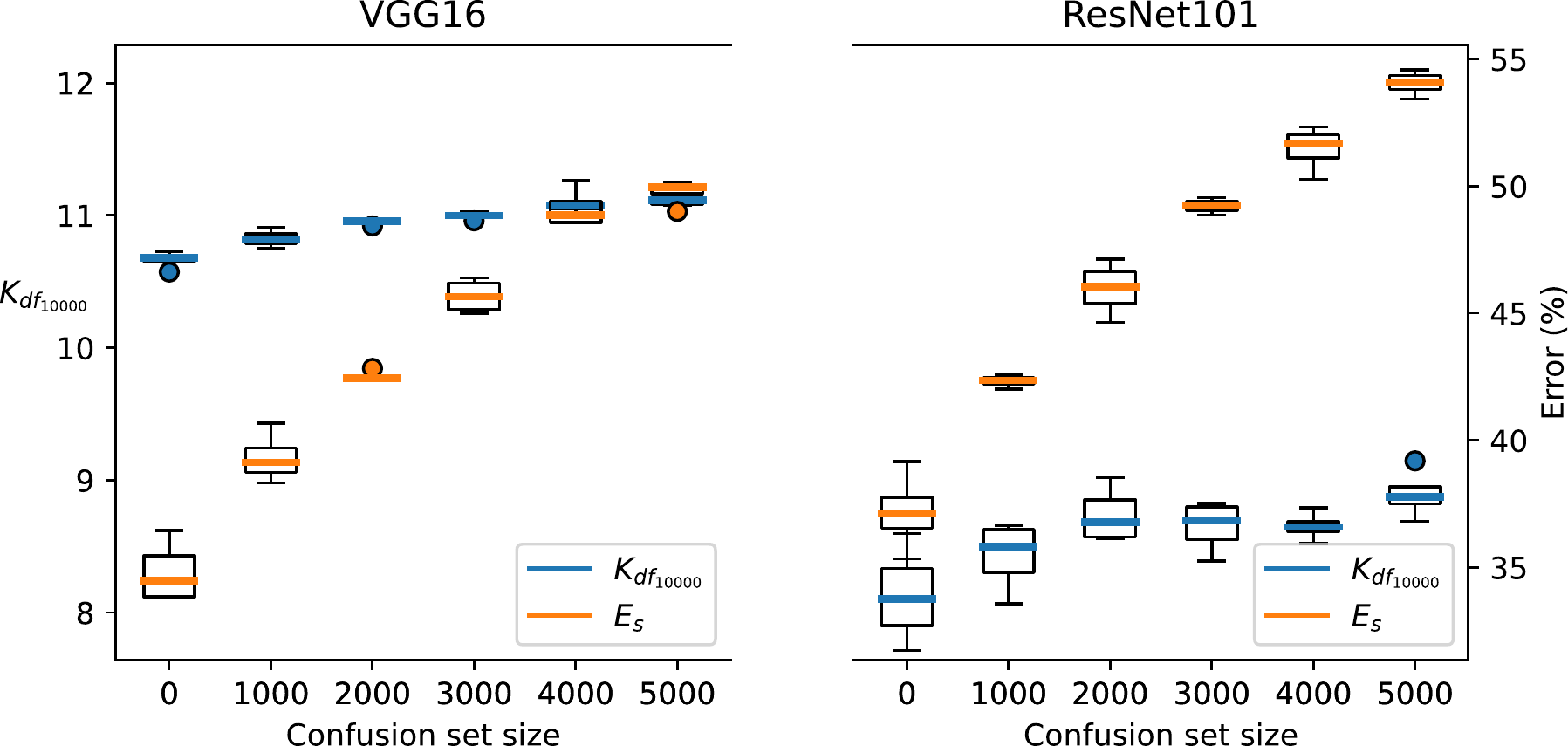}
    \caption{Box plot of conceptual capacity $\TBen_{df_{10000}}$ (blue lines over the left vertical axis) and test error (orange lines over the right vertical axis) from 4 different instances of VGG16 (left plot) and ResNET101 (right plot) trained to zero error on the CIFAR10-10K dataset combined with a confusion set of different size.}\label{fig:gen}
\end{figure*}

To test for the correlation between the conceptual capacity and generalisation error, we replicated the experiment from \citet{Neyshabur.etal:2017}.  We trained a set of networks on a subset of 10000 points from the CIFAR10 dataset, referred to as CIFAR10-10K, each time adding a confusion set of images with randomised labels.  Increasing the size of the confusion set while training to zero error forced the network into different minima with increasingly worse performance on the test set (for details of the training see Appendix \ref{app:gen}).   Figure \ref{fig:gen} shows the distribution of the results of the evaluation on the VGG16 and ResNet101-trained networks.   We ran the experiment 4 times for each size of the confusion set retaining the same CIFAR10-10K base.   Conceptual capacity was evaluated once for every network over the entire set of training points in the CIFAR10-10K subset.  As already evident in the previous experiment, ResNet101's conceptual capacity is generally lower than VGG16's, even though VGG16 generalises better on the MNIST and CIFAR10 data.  However, within a given architecture, as the test error increases, so does the conceptual capacity.

\begin{figure*}
    \centering
    \includegraphics[width=0.8\textwidth]{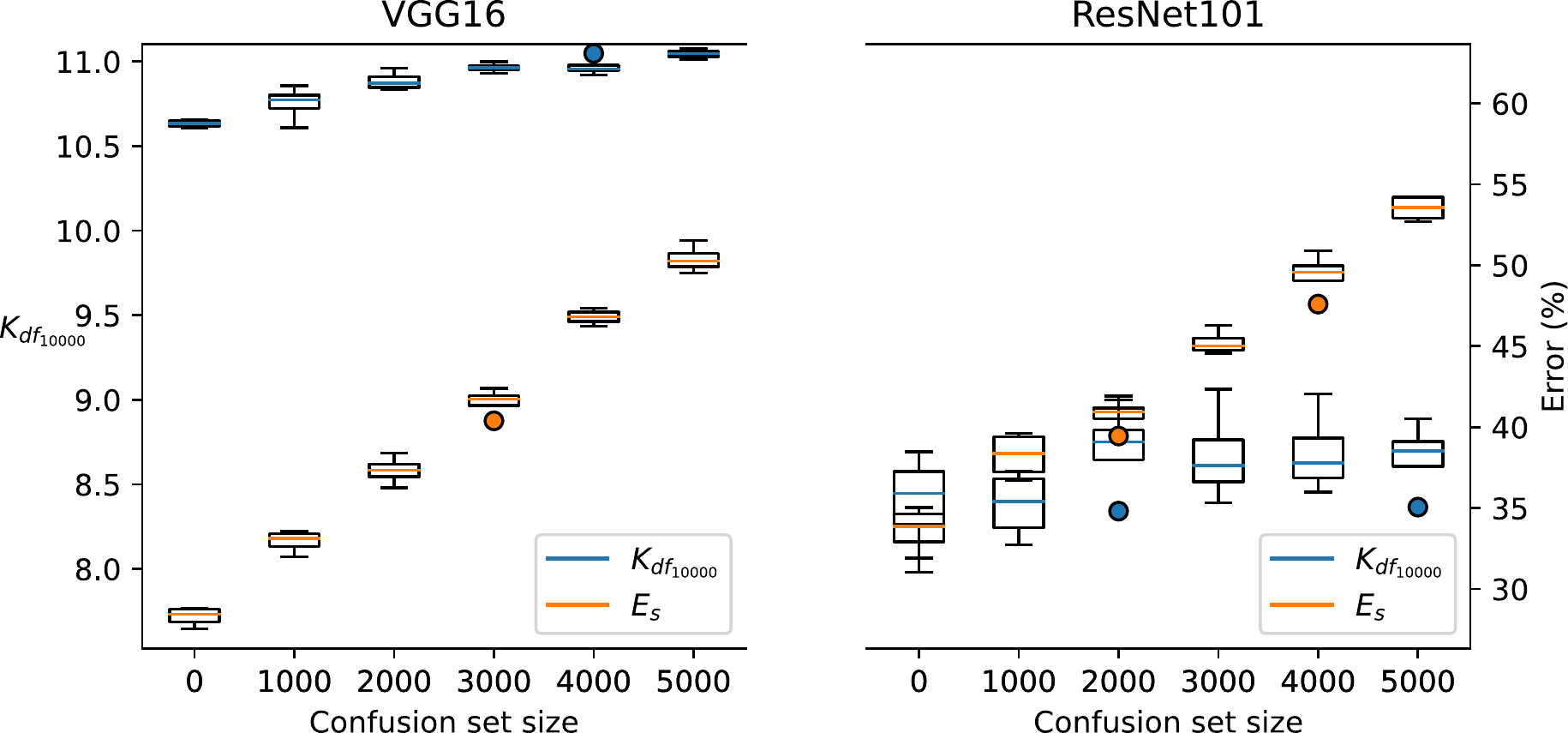}
    \caption{Box plot of conceptual capacity $\TBen_{df_{10000}}$ (blue lines over the left vertical axis) and test error (orange lines over the right vertical axis) from 4 different instances of VGG16 (left plot) and ResNet101 (right plot) trained to zero error on the CIFAR10-15MK dataset made up of confusion set of different size, but the total number of training points remaining the same.}\label{fig:gen2}
\end{figure*}

One potential confounding factor of \citet{Neyshabur.etal:2017}'s experiment is that the overall number of points in the training set changes as we change the confusion set size -- from 10K points (when confusion set size is 0) to 15K points (when the confusion set size is 5000).  The reason why conceptual capacity grows with test error might be simply because there are more inputs to memorise during training.  To account for this, we rerun the experiment topping up the $10000+M$ training points, $M$ being the confusion set size, with random selection of true labelled CIFAR10 images so that the total of training points is always 15K.  We refer to this training dataset as CIFAR10-15MK and the results of the experiment are shown in Figure \ref{fig:gen2}.  Though ResNet101's results are somewhat noisy, the correlation between conceptual capacity and training error for a fixed architecture is still there, thus satisfying \citet{Neyshabur.etal:2017}'s condition \ref{cond:gen}.

\subsection{Non-ReLU networks}\label{sec:sigmoid}

\begin{figure*}
    \centering
    \includegraphics[width=0.8\textwidth]{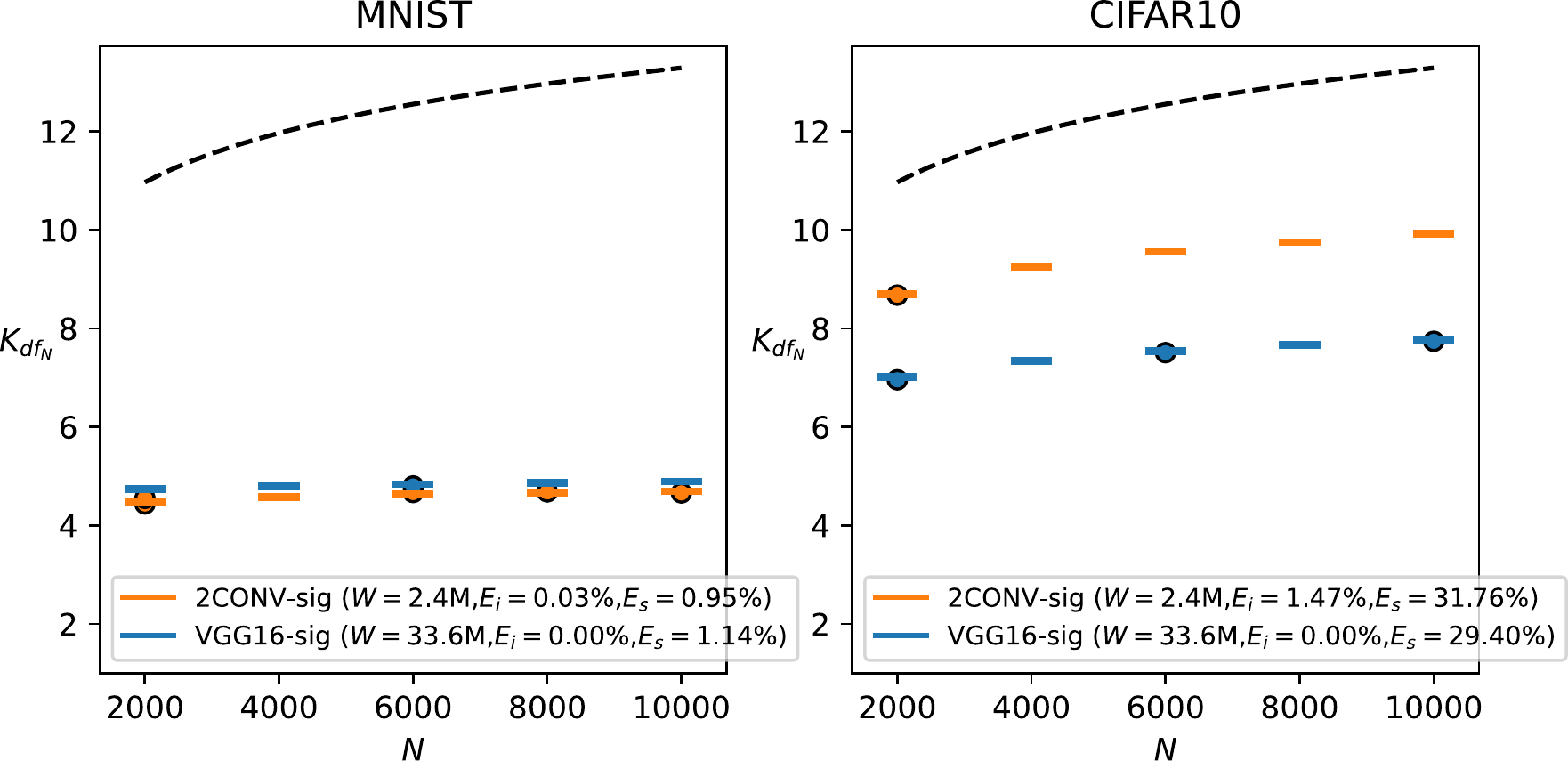}
    \caption{Conceptual capacity from 5 evaluations of $\TBen_{df_N}$ for different choices of $N$ training samples on 2CONV-sig and VGG16-sig networks; the practical upper bound on $\TBen_{df_N}$ is shown as a dashed black line; on the left is the analysis of the networks trained on MNIST, on the right the networks trained on CIFAR10; the size (in millions of parameters), the training and test error (in percent) of each network are shown in the legend as $W$, $E_i$ and $E_s$ respectively.}\label{fig:sigmoid}
\end{figure*}

Conceptual capacity can be evaluated on non-ReLU networks.  Without the exclusive use of the ReLU activation there is no upper bound on the size of the differential bundle $|df|$ and thus no theoretical upper bound on $\TBen_{df}$.  In practice however, there is a bound on $\TBen_{df_N}$, because $N$ differentials derived from $N$ inputs cannot represent more than $N$ independent concepts.  Following the logic of Lemma \ref{prop:hupper}, we have that

\begin{equation}
\TBen_{df_N}\le \log_2 N.
\end{equation}

In this evaluation we demonstrate that the conceptual capacity of neural networks with non-ReLU units after training is well below this bound, suggesting the diversity of the differential bundle is limited by the training. Figure \ref{fig:sigmoid} shows the result of the conceptual capacity analysis on 2CONV-sig and VGG16-sig, the networks from previous experiments with sigmoid activation in their fully connected layers.  The networks were trained on the MNIST and CIFAR-10 datasets (details of the training process are given in Appendix \ref{app:sigmoid}).  Once again, we ran 5 evaluations of $\TBen_{df_N}$ for a given $N$, each time selecting different random sample of inputs, but keeping the same points between calculations for different architectures.  

Interestingly, for this particular experiment, lower conceptual capacity does seem to correlate with better generalisation across different architectures.  This might be due to similarity between the 2CONV-sig and VGG16-sig's concepts spaces. Note that in Figure \ref{fig:bignet}, if we ignore ResNet101, lower $\TBen_{df_N}$ is also better for generalisation.  We believe that 2CONV and VGG16 architecture might be similar enough (no skip connections, just different hyperparameters) that their respective concept spaces end up being somewhat similar (see Section \ref{sec:acd} for some evidence on this), and thus comparing their relative conceptual capacities might be meaningful.  It was not possible to include ResNet101-based network in this evaluation, as we could not get it to train through a sigmoid activation in its latter layer(s). 


\section{Applications}\label{sec:app}

The most obvious, and arguably most pragmatic, application of conceptual capacity would be to use it as a regulariser during training in order to improve generalisation, in the hope that it would do for neural networks what maximum margin does for SVMs.  At this point it is not clear whether this would be possible and practical from a computational point of view.  Neither is it clear whether such regularisation would be any more effective than the existing regularisation techniques.  Nevertheless, right now, we can demonstrate several other uses of conceptual analysis as a tool for studying neural network models.  

\subsection{Training analysis}\label{sec:atrain}

\begin{figure*}
    \centering
    \includegraphics[width=0.9\textwidth]{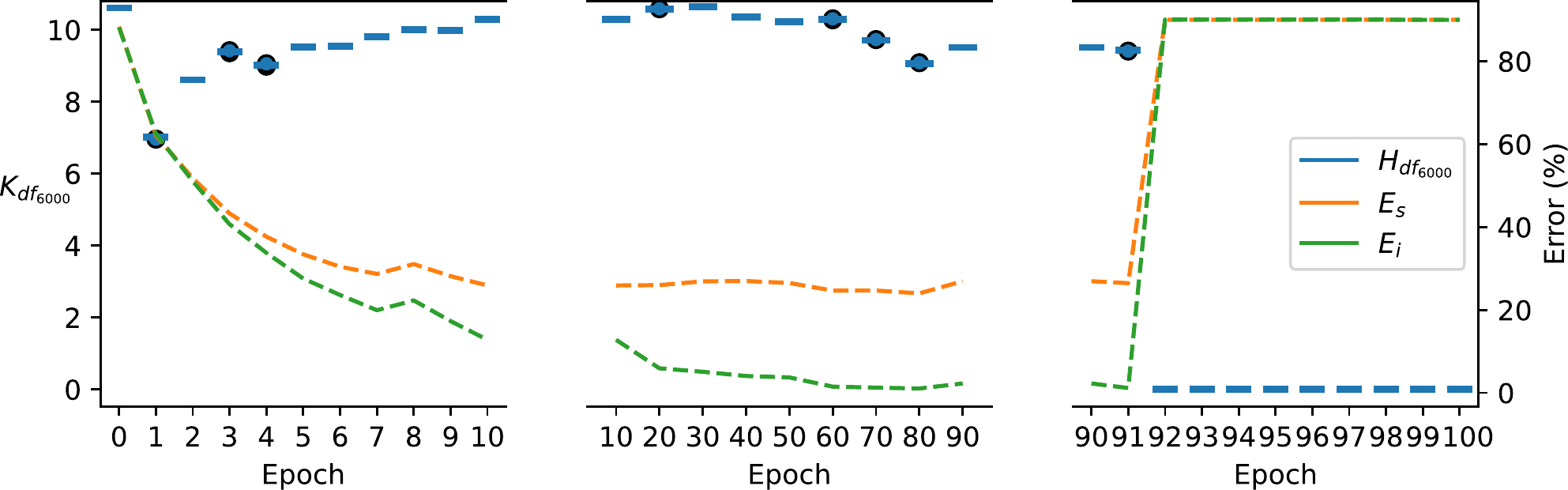}
    \caption{Conceptual capacity from 5 evaluations of $\TBen_{df_{6000}}$ over different sets of 6000 training points on VGG16 at different epochs of training a single instance on the CIFAR10 dataset;  conceptual capacity is shown as box plot (blue lines over the left vertical axis), the corresponding train and test errors are shown as dashed lines (green and orange over the right vertical axis).}\label{fig:atrain}
\end{figure*}

We can evaluate the conceptual capacity of a model during training, to see what happens to the architecture as it is learning a task.  Figure \ref{fig:atrain} shows such analysis of a VGG16 network at different stages of training on the CIFAR10 dataset (for details of the training see Appending \ref{app:atrain}).  We computed and averaged $\TBen_{df_{6000}}$ from 5 different choices of $N=6000$ training points for each epoch, using the same 5 sets of points for evaluations of conceptual capacity between the epochs.  At epoch 0, with weights from random initialisation, the conceptual capacity of the network is relatively high.  As soon as the training starts, $\TBen_{df_N}$ drops, suggesting the network is finding common patterns in the data.  Then $\TBen_{df_N}$ begins to rise when the network starts expanding its conceptual space in order to correctly label the training data.  Note the drop in conceptual capacity between epochs 60 and 80, which correlates with best generalisation.  After epoch 91, conceptual capacity plummets to 0 coinciding with the moment when the Adam \cite{Diederik.etal:2015} optimiser gets into an unstable state and the network loses generalisation.  It looks like it falls into a one concept configuration, suggesting the ReLU neurons have shifted into a permanent activation state, not switching between their linear and non-linear states for different input.

\subsection{Shallow vs. deep}\label{sec:adeep}

\begin{figure*}
    \centering
    \includegraphics[width=0.8\textwidth]{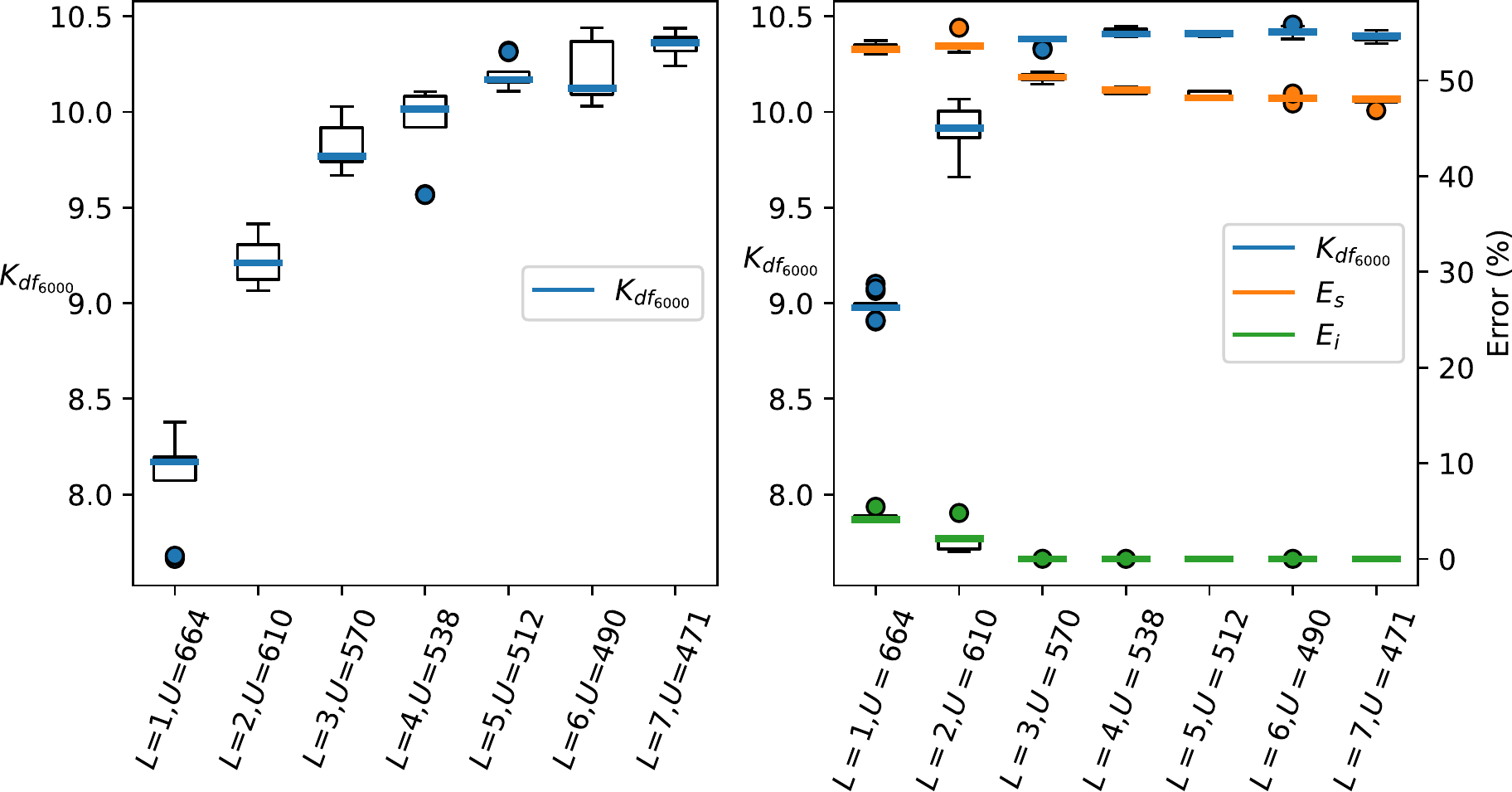}
    \caption{Box plot of conceptual capacity $\TBen_{df_{6000}}$ evaluated 3 times on 6000 different points from the CIFAR10 dataset on 5 randomly initialised not-trained (on the left) and trained (right) fully connected ReLU networks with increasing number of layers $L$ and decreasing number of neurons $U$; $U$ was chosen for given $L$ such that the total number of parameters in the network was $W\approx 4.6M$; the right-hand side plot includes the box plots of the distribution of the corresponding train and test error evaluation.}\label{fig:adeep}
\end{figure*}

Curious about the effect of the architecture on the initial conceptual capacity we devised an experiment highlighting differences between shallow and deep architectures.  We created a set of fully connected neural networks of different depth $L$, with input size for a CIFAR10 image and the number of neurons per layer changing with $L$ such that the total number of parameters remains constant.  Figure \ref{fig:adeep} shows the distribution of conceptual capacity over 5 instances of  random-weight networks for a given depth $L$ before and after training on CIFAR10 data -- in each case $\TBen_{df_{6000}}$ was computed 3 times over different choices of $N=6000$ input samples from CIFAR10 training set (for details of the training see Appending \ref{app:adeep})

On average, depth adds initial complexity to the random weight model.  After training, for $L\ge 3$ the complexity holds steady while the training error goes (marginally) down as the depth increases.  This suggests that deeper architecture does not simply reduce memorisation, but rather memorise concepts more suitable for generalisation.  Note that for $L<3$, the conceptual capacity is low because the network's internal representation does not even reach the complexity required to perfectly fit the training data.   


\subsection{Comparing two networks}\label{sec:acd}

\begin{figure*}
    \centering
    \includegraphics[width=0.9\textwidth]{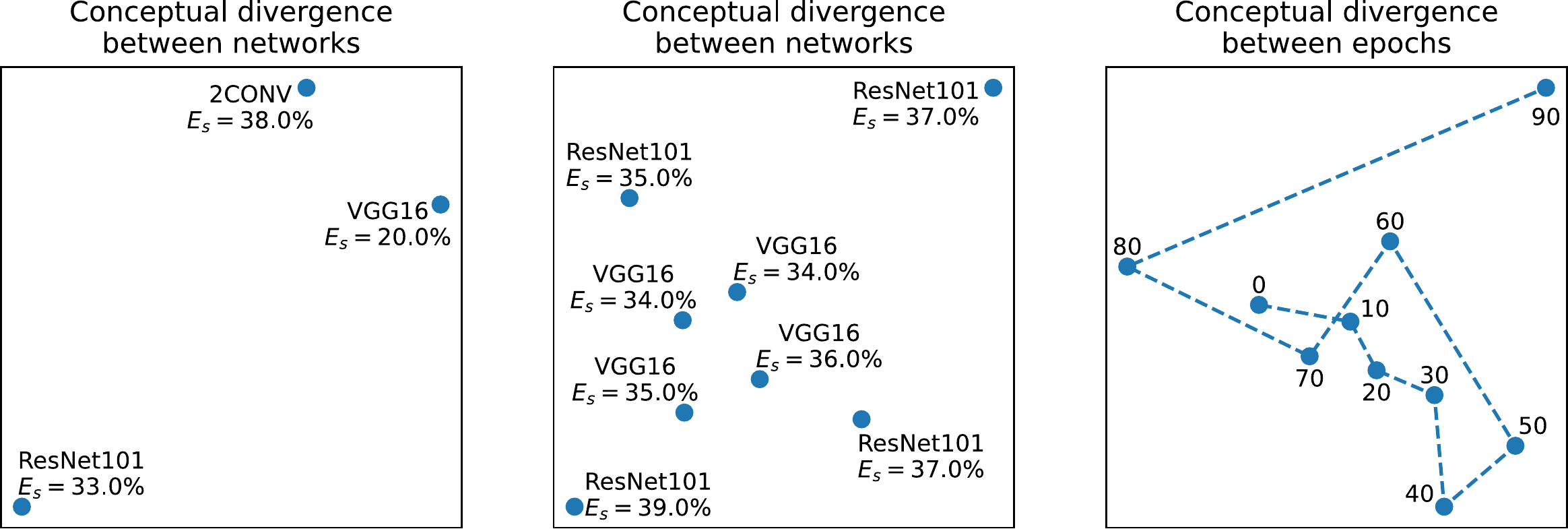}
    \caption{MDS visualisation of relative closeness of internal representation from a $|\CD(f,g)|$-based distance matrix for: the 2CONV, VGG16 and ResNet101 networks from the experiment in Section \ref{sec:bignet} trained on the CIFAR10 dataset (left); a set of VGG16 and ResNet101 networks from experiment in Section \ref{sec:gen} trained to zero error on the CIFAR10-10K dataset (middle); VGG16 network from Section \ref{sec:atrain} at different stages of training on CIFAR10 dataset (right); the left and middle image display the network name and corresponding test error by each point; the points in the right image are labelled with the epoch number (0 being the network at initialisation) with additional dashed line to reinforce the sequence of training.}\label{fig:acd}
\end{figure*}

Our framework allows for measurements of the entropy of the difference between two conceptual spaces, thus providing a tool for comparing the relative complexity of the internal representations of different networks.  Conceptual capacity can be generalised using the quantum Jensen-Shannon divergence \citep{briet2009properties}. Given two networks denoted by $f$ and $g$, the Jensen-Shannon divergence is defined based on their respective similarity matrices.

\begin{definition}[Conceptual divergence]
Let $S_{df_N}$, $S_{dg_N}$ be the similarity matrices for networks $f$ and $g$ respectively, evaluated on the same $N$ data points, and, with some abuse of notation, let $(f+g)$ be a (imaginary) network that would generate the similarity matrix $S_{d(f+g)_N} = \frac{1}{2}(S_{df_N}+S_{dg_N})$, then
\begin{equation*}
    \CD(f,g) = \TBen_{d(f+g)_N} - \frac{1}{2}(\TBen_{df_N}+\TBen_{dg_N}).
\end{equation*}
\end{definition}

Figure \ref{fig:acd} shows Multidimensional Scaling (MDS) representation of relative distance between a set of neural networks (each of specific weight and bias values) based on the pairwise absolute value of $\CD(f,g)$.  The left-hand side figure shows relative distance between networks from the experiment in Section \ref{sec:bignet} trained on the CIFAR10 dataset based on $\TBen{df_{6000}}$ for one particular choice of the $N=6000$ inputs.  The fact that 2CONV and VGG16 are much closer to each other than to ResNet101 supports our hypothesis that their conceptual spaces are similar (see discussion at the end of Section \ref{sec:sigmoid}).  

The middle plot of Figure \ref{fig:acd} shows visualisation of relative conceptual divergence between four instances of VGG16 and four instances of the ResNet101 neural networks from the experiment in Section \ref{sec:gen} trained on the CIFAR10-10K dataset to zero error.  Conceptual divergence computation was done based on $\TBen_{df_{10000}}$ over all the training points in the CIFAR10-10K dataset.  It suggest that, for the most part, internal representation of ResNet101 after training on this dataset is as distant from other ResNet101's as it is from VGG16's.  This explains a little bit why relationship between generalisation and conceptual capacity of ResNet101 shown in Figure \ref{fig:gen} is so much more noisy than that of VGG16's.  

The right-hand side plot of Figure \ref{fig:acd} shows an MDS visualisation based on conceptual divergence between different epochs of training of the VGG16 neural network from the experiment in Section \ref{sec:atrain} based on $\TBen_{df_{6000}}$ for one particular choice of the $N=6000$ points.  Network at epoch 100 was not included in this visualisation, as it was so far from all other points, that it made the instances from all the other epochs cluster into one location.

\section{Conclusion}

A ReLU neural network is essentially a switching circuit, with its neurons being forced into information passing or blocking state by the given input.  Taking the differential bundle of the network function as the manifestation of different switched configurations and treating each of those configurations as a separate PAC concept, we have proposed an entropy based measure of the network's conceptual capacity.  It can be thought of as a measure of memorisation of abstract concepts that are combined to make up the internal representation of a neural network with a particular set of weights and bias values. 

Conceptual capacity gives us a glimpse of why overparameterised neural networks generalise well -- during training, the activity of internal neurons becomes correlated and the conceptual capacity restricted for a given dataset.  Though there is theoretical potential for massive memorisation and the networks retain as many switching configurations as the number of training points in the dataset, the diversity between the PAC concepts can be quite small resulting in a relatively simple input-output mapping.  Hence we propose to use conceptual capacity as a measure of effective complexity of neural networks.  

The are still some limits to conceptual capacity.  First and foremost it is a data dependent measure, though we maintain that this is not a significant issue when evaluating on i.i.d. training data that is a good representative sample of the input-output mappings, as already required for good generalisation.  It is not an absolute measure, in that comparing conceptual capacity between architectures might not be meaningful, if those architectures happen to have very different conceptual spaces.  However, it is possible to measure the conceptual divergence between different networks, which at least informs when their conceptual spaces are very distinct.  Computationally the method is quite slow and memory intensive, especially during the phase of building and the eigen-decomposition of the similarity matrix $\Simf_{df_N}$.  However, as our evaluations show, evaluating over relatively small $N$, especially for the purpose of comparing the complexity, is sufficient.   Lastly, given the relatively small resolution of $K_{df_N}$ on ResNet101 evaluations, we note that $K_{df_N}$ might be a fairly crude measure of effective complexity.  As it is, it only gives information about the volume of the concept space ignoring how individual PAC concepts from different switch configurations of the network are distributed in that space.  We hypothesise that consideration of the clustering tendencies of the PAC concepts within the concept space, in addition to information about the volume, might provide more accurate information on a network's effective complexity.

\section*{Acknowledgement}
We gratefully acknowledge the support of NVIDIA Corporation with the donation of the TITAN X GPU used for this research.


\appendix
\section{Training details}
\label{app:training}

Details of the experiments from Sections \ref{sec:eval} and \ref{sec:app}.  In order to make MNIST and CIFAR10 suitable for VGG16 and ResNet101, and maintain consistency of the results of data-dependent $\TBen_{df_N}$ across different experiments, all the images were resized to $48\times 48$ pixels and normalised to $[-128,128]$ range.  MNIST dataset was divided into 55000 training, 5000 validation and 10000 test images.  CIFAR10 dataset was divided into 45000 training, 5000 validation and 10000 test images.  Training data was  randomly shuffled in each epoch before division into mini-batches.  Unless otherwise stated, the loss function was softmax.

\renewcommand\thesubsection{A.%
\arabic{subsection}}

\subsection{\nameref{sec:toy} training details}\label{app:toy}

The three binary class training sets used for this experiments are shown in Figure \ref{fig:toy}; input was normalised to range [0,1].  We trained a fully connected, 3 hidden layer, 128 neurons per layer ReLU network on each dataset optimising cross-entropy loss using Adam optimiser with default Tensorflow values for $\beta_1=0.9$, $\beta_2=0.999$, $\epsilon=10^{-7}$ and with learning rate of $0.001$.  Training was done over 1000 epochs using all 120 samples for single batch-based update per epoch.  In each instance the network trained to zero-training error.  
\subsection{\nameref{sec:random} experiment training details}\label{app:random}

All the models were trained using stochastic gradient descent optimiser with learning momentum of 0.9 and exponential decay of the learning rate.  VGG16 and ResNet101 training on the MNIST and MNIST-RND datasets was done over maximum of 400 epochs with mini-batch size of 128, starting learning rate of $0.01$ exponentially decayed step-wise every 40 epochs at a rate of $0.631$.  VGG16 training on the CIFAR10 and CIFAR10-RND datasets was done over maximum of 400 epochs with mini-batch size of 128, starting learning rate of $0.001$ exponentially decayed step-wise every 40 epochs at a rate of $0.398$.   ResNet101 training on the CIFAR10 and CIFAR10-RND datasets was done over maximum of 600 epochs with mini-batch size of 512, starting learning rate of $0.001$ exponentially decayed step-wise every 60 epochs at a rate of $0.398$.  In every case the training was halted as soon as zero training error was reached.  In cases when the target training error was not reached within designated maximum number of epochs, the training was rerun with new random starting weights.  

\subsection{\nameref{sec:bignet} experiment training details}\label{app:bignet}

The 2CONV network architecture was inspired by an old Tensorflow \citep{abadi2016tensorflow} convnet tutorial for MNIST.  Our version of the architectures consisted of two convolutional layers of 32 neurons/filters taking input from a $5\times 5$ input window with step of 1 and `same' padding, each followed by a max pooling layer with $3\times 3$ window input, step of 2 and `same' padding.  These were followed by a 512-neuron fully connected layer before the final output layer.  All the neurons used ReLU activation.  In all the instances training was run over 200 epochs using Adam optimiser with learning rate of $0.001$ and mini-batches of 128 samples.  The final weights of the network were chosen from the best performing epoch in terms of the validation error.  

\subsection{\nameref{sec:gen} experiment training details}\label{app:gen}

The CIFAR10-10K subset of 10000 images with true CIFAR10 labels was combined in turn with a confusion set of 0, 1000, 2000, 3000, 4000, and 5000 other images from the CIFAR10 dataset with randomised labels.  We trained VGG16 and ResNet101 models on each combination of the true+confusion dataset.  This was repeated 4 times, keeping the true label set the same and but randomly choosing a different confusion set. VGG16 training was done with the stochastic gradient optimiser, batch size of 128, starting learning rate of 0.01 decayed every 20 epochs step-wise with rate of 0.5.  Training was halted when the training error reached zero and the loss was less than $10^{-4}$, otherwise, after 200 epochs, the lowest loss weights that give zero training error were used.  If zero error was not obtained after 200 epochs, training was restarted with new randomly initialised weights.  ResNet101 training was done with the Adam optimiser for a maximum of 400 epochs, batch size of 64, starting learning rate of 0.01 decayed every 40 epochs step-wise with rate of 0.5.  Training was halted when the training error reached zero, otherwise, after 400 epochs it was restarted with new randomly initialised weights.

The CIFAR10-15MK experiment was done exactly under the same conditions as CIFAR10-10K, except we added 5000, 4000, 3000, 2000, 1000, and 0 images with their true labels (from the CIFAR10 dataset) to the respective confusion set of 0, 1000, 2000, 3000, 4000, and 5000.

\subsection{\nameref{sec:sigmoid} experiment training details}\label{app:sigmoid}

We trained the 2CONV-sig and VGG16-sig networks, which are just 2CONV and VGG16 architectures with sigmoid activation in their fully connected hidden layers,  on the MNIST and CIFAR10 datasets.  We used Adam optimiser with constant learning rate of $0.001$ and batch size 128.  Networks were training for 200 epochs, at which point the weights that gave best accuracy on the validation set were selected.   

\subsection{\nameref{sec:adeep} experiment training details}\label{app:adeep}

In all the instances training was run over 200 epochs with mini-batches of 128 using Adam optimiser with starting learning rate of $0.0001$ exponentially decayed step-wise every 20 epochs at a rate of $0.631$.   The final weights of the network were chosen from the best performing epoch in terms of the validation error.  

\subsection{\nameref{sec:atrain} experiment training details}\label{app:atrain}

We trained VGG16 network on CIFAR10 dataset using Adam optimiser with constant learning rate of $0.001$ and batch size of 128 saving network parameters at every epoch.

\vskip 0.2in
\bibliography{references}

\end{document}